\documentclass[conference]{IEEEtran}
\makeatletter
\def\ps@headings{%
\def\@oddhead{\mbox{}\scriptsize\rightmark \hfil \thepage}%
\def\@evenhead{\scriptsize\thepage \hfil \leftmark\mbox{}}%
\def\@oddfoot{}%
\def\@evenfoot{}}
\makeatother \pagestyle{headings}

\usepackage{cite}
\usepackage{subfigure}
\usepackage{url}
\usepackage{algorithm}
\usepackage{array}
\usepackage{amsmath,amsfonts,amssymb,graphicx}
\usepackage{bm}
\usepackage{mathrsfs}
\usepackage{dsfont}
\usepackage{multicol}
\usepackage[end]{algpseudocode}
\usepackage{multirow}

\newcommand{\commnt}[1] {$//$ \textsc{#1} }
\newtheorem{theorem}{\textbf{Theorem}}
\newtheorem{lemma}{\textbf{Lemma}}

\newtheorem{proof}{\textbf{Proof}}

\hyphenation{op-tical net-works semi-conduc-tor}

\begin{document}
%
\title{Efficient Online Learning for Opportunistic Spectrum Access}

\author{\IEEEauthorblockN{Wenhan Dai$^\dag$, Yi Gai$^\sharp$ and Bhaskar~Krishnamachari$^\sharp$}
\IEEEauthorblockA{$^\dag${Massachusetts Institute of Technology, Cambridge, MA, USA}\\ $^\sharp$University of Southern California, Los Angeles, CA, USA\\
Email: whdai@mit.edu, \{ygai, bkrishna\}@usc.edu}}


\maketitle

\begin{abstract}
The problem of opportunistic spectrum access in cognitive radio
networks has been recently formulated as a non-Bayesian restless
multi-armed bandit problem. In this problem, there are $N$ arms
(corresponding to channels) and one player (corresponding to a
secondary user). The state of each arm evolves as a finite-state
Markov chain with unknown parameters. At each time slot, the player
can select $K < N$ arms to play and receives state-dependent rewards
(corresponding to the throughput obtained given the activity of
primary users). The objective is to maximize the expected total
rewards (i.e., total throughput) obtained over multiple plays. The
performance of an algorithm for such a multi-armed bandit problem is
measured in terms of regret, defined as the difference in expected
reward compared to a model-aware genie who always plays the best $K$
arms. In this paper, we propose a new continuous exploration and
exploitation (CEE) algorithm for this problem. When no information
is available about the dynamics of the arms, CEE is the first
algorithm to guarantee near-logarithmic regret uniformly over time.
When some bounds corresponding to the stationary state distributions
and the state-dependent rewards are known, we show that CEE can be
easily modified to achieve logarithmic regret over time. In
contrast, prior algorithms require additional information concerning
bounds on the second eigenvalues of the transition matrices in order
to guarantee logarithmic regret. Finally, we show through numerical
simulations that CEE is more efficient than prior algorithms.
\end{abstract}

\section{Introduction}
Multi-arm bandit (MAB) problems are widely used to make optimal
decisions in dynamic environments. In the classic MAB problem, there
are $N$ independent arms and one player. At every time slot, the
player selects $K (\geq 1)$ arms to sense and receives a certain
amount of rewards. In the classic non-Bayesian formulation, the
reward of each arm evolves in i.i.d. over time and is unknown to the
player. The player seeks to design a policy which can maximize the
expected total reward.

One interesting variant of multi-armed bandits is the restless
multi-arm bandit problem (RMAB). In this case, all the arms, whether
selected (activated) or not, evolve as a Markov chain at every time
slot. When one arm is played, its transition matrix may be different
from that when it is not played. Even if the player knows the
parameters of the model, which can be referred to as the Bayesian
RMAB since the beliefs on each arm can be updated at each time based
on the observations in this case, the design of the optimal policy
turns to be a PSPACE hard optimization
problem~\cite{Papadimitriou}.

In this paper, we consider the more challenging non-Bayesian RMAB
problems, in which parameters of the model are unknown to the
player. The objective is to minimize \emph{regret}, defined as the
gap between the expected reward that can be achieved by a suitably
defined genie that knows the parameters and that obtained by the
given policy. As stated before, finding the optimal policy, which is
in general non-stationary, is P-SPACE hard even if the parameters
are known. So we use instead a weaker notion of regret, where the
genie always selects the $K$ most rewarding arms that have highest
stationary rewards when activated.

We propose a sample mean-based index policy without information
about the system. We prove that this algorithm achieves regret
arbitrarily close to logarithmic uniformly over time horizon.
Specifically, the regret can be bound by
$Z_1G(n)\ln{n}+Z_2\ln{n}+Z_3G(n)+Z_4$, where $n$ is time, $Z_i,
i=1,2,3,4$ are constants and $G(n)$ can be any divergent
non-decreasing sequence of positive integers. Since the growth speed
of $G(n)$ can be arbitrarily slowly, the regret of our algorithm is
nearly logarithmic with time. The significance of such a sub-linear
time regret bound is that the time-averaged regret tends to zero (or
possibly even negative since the genie we compare with is not using
a globally optimal policy), implying the time-averaged rewards of
the policy will approach or even possibly exceed those obtained by
the stationary policy adopted by the model-aware genie.

If the some bounds corresponding to the stationary state
distributions and the state-dependent rewards are known, we show
that the algorithm can be easily modified and achieves logarithmic
regret over time. Compared to prior
work~\cite{Haoyang:2011}~\cite{Haoyang:2011_1}~\cite{TekinLiu}, our
algorithm requires the least information about the system; in
particular, we do not require to know the second largest eigenvalue
of transition matrix or multiplicative symmetrization matrix.
Moreover, our simulation results show that our algorithm obtains the
lowest regret compared to previously proposed algorithms when the
parameters just satisfy the theoretical boundaries.

Research in restless multi-arm bandit problems has a lot of
applications. For instance, it has been applied to dynamic spectrum
sensing for opportunistic spectrum access in cognitive radio
networks, where a secondary user must select $K$ of $N$ channels to
sense at each time to maximize its expected reward from transmission
opportunities. If the primary user occupancy on each channel is
modeled as a Markov chain with unknown parameters, then we obtain an
RMAB problem. We conduct our simulation-based evaluations in the
context of this particular problem of opportunistic spectrum access.

The remainder of this paper is organized as follows: in Section
\ref{sec:Rel}, we briefly review the related work on MAB problems.
In Section \ref{sec:Formulation}, we formulate the general RMAB
problem. In Section \ref{sec:Sensing} and Section
\ref{sec:Sensing_K}, we introduce a sample mean based policy and
provide a proof for the regret upper bound separately for single and
multiple channel selection cases. In Section \ref{sec:NR}, we
evaluate our algorithm and compare it via simulations with the RCA
algorithm proposed in~\cite{TekinLiu} and the RUCB proposed
in~\cite{Haoyang:2011} for the problem of opportunistic spectrum
access. We conclude the paper in Section \ref{sec:conclusion}.

\section{Related Work}
\label{sec:Rel}

In 1985, Lai and Robbins proved that the minimum regret grows with
time in a logarithmic order~\cite{Lai:Robbins}. They also proposed
the first policy that achieved the optimal logarithmic regret for
multi-armed bandit problems in which the rewards are i.i.d. over
time. Their policy only achieves the optimal regret asymptotically.
Anantharam \emph{et al.} extended this result to multiple
simultaneous arm plays, as well as single-parameter Markovian rested
rewards~\cite{Anantharam:1987}. Auer \emph{et al.} developed UCB1
policy in 2002, applying to i.i.d. reward distributions with finite
support, achieving logarithmic regret over time, rather than only
asymptotically in time. Their policy is based on the sample mean of the
observed data, and has a rather simple index selection method.

One important variant of classic multi-armed bandit problem is the
Bayesian MAB. In this case, \emph{a priori} probabilistic knowledge
about the problem and system is required. Gittins and Jones
presented a simple approach for the rested bandit problem, in which one
arm is activated at each time and only the activated arm changes
state as a known Markov process~\cite{Gittins:1972}. The optimal
policy is to play the arm with highest Gittins' index. The
\emph{restless bandit problem} was posed by Whittle in
1988~\cite{Whittle}, in which all the arms can change state. The
optimal solution for this problem has been shown to be PSPACE-hard
by Papadimitriou and Tsitsiklis~\cite{Papadimitriou}.  Whittle
proposed an index policy which is optimal under certain
conditions~\cite{Weber:1990}. This policy can offer near-optimal
performance numerically, however, its existence and optimality are
not guaranteed. The restless bandit problem has no general solution
though it may be solved in special cases. For instance, when each
channel is modeled as identical two-state Markov chain, the myopic
policy is proved to be optimal if the channel number is no more than 3
or is positively
correlated~\cite{zhao:krishnamachari:twc2008}~\cite{ahmad:liu:it2009}.

There have been a few recent attempts to solve the restless multi-arm
bandit problem under unknown models. In~\cite{TekinLiu}, Tekin and Liu use a
weaker definition of regret and propose a policy (RCA) that achieves
logarithmic regret when certain knowledge about the system is known.
However, the algorithm only exploits part of observing data and
leaves space to improve performances. In~\cite{Haoyang:2011},
Haoyang Liu \emph{et al.} proposed a policy, referred to as RUCB,
achieving a logarithmic regret over time when certain system
parameters are known. The regret they adopt is the same as
in~\cite{TekinLiu}. They also extend the RUCB
policy to achieve a near-logarithmic regret over time when no
knowledge about the system is available. Conclusions on multi-arm
selections are given in~\cite{Haoyang:2011_1}. However, they only
give the upper bound of regret at the end of a certain time point
referred as \emph{epoch}. When no \emph{a priori} information
about the system is known, their analysis of regret gives the upper
bound over time only asymptotically, not uniformly.

 In our previous work~\cite{W.Dai:2011}, we adopted a stronger definition of regret, which is defined as the reward loss with the optimal policy. Our policy achieve a near-logarithmic regret without \emph{a prior} of the system. It applies to special cases of the RMAB, in particular the same scenario as in~\cite{zhao:krishnamachari:twc2008} and ~\cite{ahmad:liu:it2009}.

\section{Problem Formulation}
\label{sec:Formulation} We consider a time-slotted system with one
player and $N$ independent arms. At each time slot, the player
selects (activates) $K(<N)$ arms and gets a certain amount of rewards
according to the current state of the arm. Each arm is modeled as a
discrete-time, irreducible and aperiodic Markov chain with finite
state space. We assume the arms are independent. Generally, the
transition matrices in the activated model and the passive model are
not necessarily identical. The player can only see the state of the
sensed arm and does not know the transitions of the arms. The player
aims to maximize its expected total reward (throughput) over some
time horizon by choosing judiciously a sensing policy $\phi$ that
governs the channel selection in each slot. Here, a policy is an
algorithm that specifies arm selection based on observation
history.

Let $S^i$ denote the state space of arm $i$. Denote $r_x^i$ the
reward obtained from state $x$ of arm $i$, $x\in S^i$. Without loss
of generality, we assume $r_x^i\leq 1, \forall x\in S^i, \forall i$.
Let $P_j$ denote the active transition matrix of arm $j$ and $Q_j$
denote the passive transition  matrix. Let
$\mathbf{\pi}^i=\{\pi_x^i, x\in S^i\}$ denote the stationary
distribution of arm $i$ in the active model, where $\pi_x^i$ is the
stationary probability of arm $i$ being in state $x$ (under $P_i$).
The stationary mean reward of arm $i$, denoted by $\mu^i$, is the
expected reward of arm $i$ under its stationary distribution:

\begin{equation}
\mu^i = \sum_{x \in S^i}r_x^i\pi_x^i
\end{equation}

Consider the permutation of $\{1,\cdots,N\}$ denoted as $\sigma$,
such that $\mu^{\sigma(1)} > \mu^{\sigma(2)} > \mu^{\sigma(3)} >
\cdots \mu^{\sigma(N)}.$ We are interested in designing policies
that perform well with respect to $regret$, which is defined as the
difference between the expected reward that is obtained by using the
policy selecting $K$ best arms and that obtained by the given
policy. The best arm obtains the highest stationary mean reward.

Let $Y^{\Phi}(t)$ denote the reward obtained at time $t$ with policy
$\Phi$. The total reward achieved by policy $\Phi$ is given by

\begin{equation}
R^{\Phi}(t) = \sum_{j=1}^{t}Y^{\Phi}(t)
\end{equation}

and the regret $r^\Phi(t)$ achieved by policy $\Phi$ is given by

\begin{equation}\label{eqn:regret}
r^\Phi(t) = t\sum_{j=1}^{K}\mu^{\sigma(j)} - \mathbb{E}(R^{\Phi}(t))
\end{equation}

The objective is to minimize the growth rate of the regret.

\section{Analysis for Single Arm Selection}
\label{sec:Sensing} In this section, we focus on the situation when
$K = 1$. In this case, the player selects one arm each time. We
first show an algorithm called \emph{Continuous Exploration and
Exploitation} (CEE) and then prove that our algorithm achieves a
near-logarithmic regret with time.

\subsection{The CEE Algorithm for non-Bayesian RMAB}

Our CEE algorithm (see Algorithm 1) works as follows. We first
process the initialization by selecting each arm for certain time
slots (we call these time slots \emph{step}), then iterate the arm selection by
searching the index that maximizes the equation shown in
line~\ref{line:9} in Algorithm 1 and operating this arm for one
\emph{step}. A key issue is how long to operate each arm at each
step. It turns out from the analysis we present in the next
subsection that it is desirable to slowly increase the duration of
each step using any (arbitrarily slowly) divergent non-decreasing
sequence of positive integers $\{B_i\}_{i=1}^\infty$.

A list of notations is summarized as follows:
\begin{itemize}
\item n:\;time.
\item $B_i$:\;duration of $i_{th}$ step.
\item $\hat{A}_i(i_j)$:\; sample mean of the $i_{j\;th}$ step arm $i$ being selected.
\item $\hat{X}_j$:\;sum of sample mean in all the steps arm $i$ being selected.
\end{itemize}

\begin{algorithm} [ht]
\caption{Continuous Exploration and Exploitation (CEE): Single Arm
Selection} \label{alg:sensing1}
\begin{algorithmic}[1]
\State \commnt{Initialization}

\State Play arm $i$ for $B_i$ time slots, denote $\hat{A}_i(1)$ as
the sample mean of these $B_i$ rewards, $i=1,2,\cdots,N$ \State
$\hat{X}_i = \hat{A}_i(1)$, $i=1,2,\cdots,N$ \State $n =
\sum_{i=1}^{N} B_i$ \State $i=N+1$, $i_j=1$, $j=1,2,\cdots,N$

\State \commnt{Main loop}

\While {1}
    \State Find $j$ such that
    $j = \arg\max\frac{\hat{X}_j}{i_j}+\sqrt{\frac{L\ln{n}}{i_j}}$(L can be any constant greater than 2) \label{line:9}
    \State $i_j = i_j+1$
    \State Play arm $j$ for $B_i$ slots, let $\hat{A}_{j}(i_j)$ record the sample mean of these $B_i$ rewards
    \State $\hat{X}_j = \hat{X}_j + \hat{A}_{j}(i_j)$
    \State $i = i+1$
    \State $n = n + B_i$;
\EndWhile
\end{algorithmic}
\end{algorithm}

\subsection{Regret Analysis}
We first define the discrete function $G(n)$, which represents the
value of $B_i$, at the $n^{th}$ time step in Algorithm 1:
\begin{equation}\label{eq:G(n)}
G(n) = \mathop{\min}_{I} B_I \ s.t.  \mathop{\sum}_{i=1}^{I}B_i\geq
n
\end{equation}

Since $B_i \geq 1$, it is obvious that $G(n) \leq B_n, \forall n$.
Note that
 since $B_i$ can be any arbitrarily slow non-decreasing diverging sequence, $G(n)$ can also grow arbitrarily slowly.

In this subsection, we show that the regret achieved by our
algorithm has a near-logarithmic order. This is given in the
following Theorem \ref{theoremRA:K=1}.

\begin{theorem}\label{theoremRA:K=1}
Assume all arms are modeled as finite state, irreducible, aperiodic
and reversible Markov chains. All the states (rewards) are positive.
The expected regret with Algorithm 1 after $n$ time slots is at most
$Z_1 G(n)\ln{n} + Z_2 \ln{n} + Z_3 G(n) + Z_4$, where
$Z_1,Z_2,Z_3,Z_4$ are constants only related to $P_i, i=1,2,\cdots,
N$, explicit expressions are at the end of proof for Theorem
\ref{theoremRA:K=1}.

\end{theorem}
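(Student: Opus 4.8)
The plan is to bound the regret by controlling the expected number of times a suboptimal arm is played, following the standard UCB analysis framework adapted to the restless Markovian setting. Since the genie always plays the single best arm $\sigma(1)$, the regret in \eqref{eqn:regret} is driven by the plays of suboptimal arms $i \neq \sigma(1)$. Each time a suboptimal arm is selected, it is operated for one entire step, so I must track both the number of \emph{steps} devoted to suboptimal arms and the \emph{duration} of those steps. Because the step durations $B_i$ grow (at rate captured by $G(n)$), the regret contribution of a suboptimal step at time $n$ scales roughly like $G(n)$ times the per-slot reward gap, which is the source of the $G(n)\ln n$ term.

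First I would write the expected regret as a sum over suboptimal arms of the expected total time spent on each, times the corresponding stationary-reward gap $\Delta_i = \mu^{\sigma(1)} - \mu^i$, plus correction terms arising from the fact that the empirical reward over a finite step need not equal the stationary mean $\mu^i$. The key quantity is $\mathbb{E}[i_j]$, the expected number of steps arm $j$ is chosen. I would bound this exactly as in the UCB1/Anantharam-style argument: a suboptimal arm is selected at a given step only if its index $\hat{X}_j/i_j + \sqrt{L\ln n / i_j}$ exceeds that of the optimal arm, which forces at least one of three low-probability events — the optimal arm's index underestimates its true mean, the suboptimal arm's empirical mean overestimates its true mean, or the confidence term is too small because $i_j$ is below a threshold of order $\ln n / \Delta_i^2$. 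The deterministic threshold contributes the dominant $\ln n$ count of suboptimal steps.

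The main obstacle, and the place where this analysis genuinely departs from the i.i.d. case, is bounding the probability of the two large-deviation events for sample means formed from \emph{Markovian} observations across variable-length steps. Here I would invoke a concentration inequality for reversible, irreducible, aperiodic finite Markov chains (a Chernoff-type bound of Gillman/Lezaud type) to control $\Pr(|\hat{X}_j/i_j - \mu^j| > \text{const})$. This is precisely why the theorem assumes reversibility and positivity of states: reversibility gives a spectral-gap-based deviation bound, and since the $B_i$ are increasing, each step's sample mean concentrates at least as well as a single-slot observation would, so the per-step deviation probabilities are summable in a way that yields a constant (rather than growing) contribution. I expect the bookkeeping to show that the sum of these deviation probabilities over all steps converges, contributing to the constant terms $Z_2\ln n$, $Z_3 G(n)$, and $Z_4$.

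Finally I would assemble the pieces: multiply the bounded step-count for each suboptimal arm by the step duration (bounded by $G(n)$ near time $n$) and by the gap $\Delta_i$ to obtain the leading $Z_1 G(n)\ln n$ term, collect the concentration-driven remainders into the lower-order terms, and read off explicit constants $Z_1,\dots,Z_4$ in terms of the transition matrices $P_i$ (through the spectral gaps, the gaps $\Delta_i$, and the bound $r_x^i \le 1$). The near-logarithmic conclusion then follows because $G(n)$ may be taken to diverge arbitrarily slowly. I would caution that the trickiest bookkeeping is aligning the "step index" $i_j$ with the "time index" $n$ through the definition \eqref{eq:G(n)} of $G(n)$, since the reward accounting is per-slot but the decision and concentration analysis are per-step.
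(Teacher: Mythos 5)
Your high-level skeleton---decomposing the regret into suboptimal-step counts times the stationary gaps plus transient corrections, and bounding the number of suboptimal steps by the UCB-style three-event argument with a deterministic $O(\ln n/\Delta_j^2)$ threshold---is the same as the paper's. The genuine gap is in the concentration step. You propose to control $\Pr\left(|\hat{X}_j/i_j - \mu^j| > \mathrm{const}\right)$ with a Gillman/Lezaud spectral-gap bound for reversible chains. But the observations of arm $j$ entering $\hat{X}_j$ are \emph{not} one contiguous trajectory of $P_j$: they are disjoint segments, and the state in which each new segment begins depends on the entire interaction history (when the policy last left arm $j$, how long other arms were played, and, in the general RMAB model, unobserved passive dynamics in between). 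A spectral-gap deviation bound applies to a single run of the chain from a fixed initial distribution; it does not apply to this history-dependent concatenation---this is exactly the difficulty that forces RCA to work with regenerative cycles. The paper's route avoids spectral machinery entirely: Lemma~\ref{lemma:Markov} (Anantharam et al.) gives, uniformly over the starting state, a bias bound $|\mathbb{E}[\hat{A}_{j}(\cdot)] - \mu^j| \le C_P/B_q$ for every step of length at least $B_q$ as in (\ref{eqn:Bq}); the per-step sample means then form a bounded sequence whose \emph{conditional} means drift within a band of width $2C_P/B_q$ around $\mu^j$, and Lemma~\ref{lemma:chernoff}---a Chernoff--Hoeffding variant built to tolerate exactly such drift---produces the $t^{-4}$ tails needed in the three-event argument. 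Notably, the paper's proof never actually invokes reversibility, so your reading that reversibility is ``precisely why'' the theorem holds misidentifies where the difficulty lies.

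Your fallback claim---that since the $B_i$ increase, ``the per-step deviation probabilities are summable in a way that yields a constant contribution''---also fails in the regime the theorem is really about, namely $\{B_i\}$ diverging arbitrarily slowly. For a fixed tolerance $\epsilon$, the deviation probability of a single step of length $B_i$ is at best of order $e^{-cB_i\epsilon^2}$, and $\sum_i e^{-cB_i\epsilon^2}$ diverges for, e.g., $B_i = \lceil \ln\ln i\rceil$. So one cannot obtain the constant-order remainder terms by summing per-step failure probabilities; the concentration must aggregate \emph{across} steps, so that the exponent scales with the number of steps $s$ as in (\ref{eqn:p1}) and (\ref{eqn:p2}), which is what Lemma~\ref{lemma:chernoff} delivers and what your plan lacks. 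There are also smaller pieces you would need to supply---the thresholds $q$, $\alpha^*$, $\alpha^i$, and $\gamma$ in the paper exist precisely to guarantee that the bias bound $C_P/B_q$ and the minimum play counts are in force before the tail bounds are invoked---but the missing aggregated concentration argument is the essential defect.
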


The proof of Theorem \ref{theoremRA:K=1} uses the following fact and
two lemmas that we present next.

\fact \label{fact:chernoff} (Chernoff-Hoeffding bound) Let
$X_1,\cdots,X_n$ be random variables with common range $[0,1]$ and
such that $\mathbb{E}[X_t|X_1,\cdots,X_{t-1}]=\mu$. Let $S_n =
X_1+\cdots+X_n$. Then for all a $\geq 0$
\begin{equation}
\mathbb{P}\{S_n \geq n\mu + a\}\leq e^{-2a^2/n}; \mathbb{P}\{S_n
\leq n\mu - a\}\leq e^{-2a^2/n}
\end{equation}

The first lemma is a non-trivial variant of the Chernoff-Hoeffding
bound, first introduced in our recent work~\cite{W.Dai:2011}, that
allows for bounded differences between the conditional expectations
of sequence of random variables that we revealed sequentially:

\lemma \label{lemma:chernoff} Let $X_1,\cdots,X_n$ be random
variables with range $[0,b]$ and such that
$|\mathbb{E}[X_t|X_1,\cdots,X_{t-1}]- \mu | \leq C$. $C$ is a
constant number such that $0< C < \mu$. Let $S_n = X_1+\cdots+X_n$.
Then for all  $a \geq 0$,
\begin{equation} \label{eqn:chernoff1}
\mathbb{P}\{S_n \geq n(\mu+C) + a\}\leq
e^{-2(\frac{a(\mu-C)}{b(\mu+C)})^2/n}
\end{equation} and
\begin{equation} \label{eqn:chernoff2}
\mathbb{P}\{S_n \leq n(\mu-C) - a\}\leq e^{-2(a/b)^2/n}
\end{equation}

\begin{proof} We first prove (\ref{eqn:chernoff1}). We generate random variables
$\hat{X}_1, \hat{X}_2,\cdots, \hat{X}_n$ as follows:

$\hat{X}_1 = (\mu+C)\frac{X_1}{\mathbb{E}[X_1]},$

$\hat{X}_2 = (\mu+C)\frac{X_2}{\mathbb{E}[X_2|\hat{X}_1]},$

$\cdots$

$\hat{X}_t =
(\mu+C)\frac{X_t}{\mathbb{E}[X_t|\hat{X}_1,\hat{X}_2,\cdots,\hat{X}_{t-1}]}.$

Note that
\begin{equation}
|\mathbb{E}[X_t|X_1,\cdots,X_{t-1}]- \mu | \leq C \nonumber
\end{equation}
 So we have
 \begin{equation}
 |\mathbb{E}[X_t|\hat{X}_1,\cdots,\hat{X}_{t-1}]- \mu | \leq C \nonumber
\end{equation}

Since $\frac{\hat{X}_t}{X_t}$ is at least 1, at most
$\frac{\mu+C}{\mu-C}$, $\hat{X}_1, \hat{X}_2,\cdots, \hat{X}_n$ have
finite support (they are in the range $[0, b\frac{\mu+C}{\mu-C}]$).
Besides, $\mathbb{E}[\hat{X}_t|\hat{X}_1,\cdots,\hat{X}_{t-1}] = \mu
+ C$, $\forall t$.

Let $\hat{S}_n = \hat{X}_1+\hat{X}_2+\cdots+\hat{X}_n$, then for all
$a \geq 0$,
\begin{equation}
\begin{split}
\mathbb{P}\{S_n \geq n(\mu+C) + a\} & \leq \mathbb{P}\{\hat{S}_n
\geq
n(\mu+C) + a\} \\
& \leq e^{-2(\frac{a(\mu-C)}{b(\mu+C)})^2/n}
\end{split}
\end{equation}
The first inequality stands because $\frac{\hat{X}_t}{X_t} \geq
1$,$\forall t$. The second inequality stands because of Fact 1.

The proof of (\ref{eqn:chernoff2}) is similar. We generate random
variables $\hat{X}_1',\hat{X}_2',\cdots,\hat{X}_n'$ as follows:

$\hat{X}_1' = (\mu-C)\frac{X_1}{\mathbb{E}[X_1]},$

$\cdots$

$\hat{X}_n' =
(\mu-C)\frac{X_n}{\mathbb{E}[X_n|\hat{X}_1',\hat{X}_2',\cdots,\hat{X}_{n-1}']}.$

Note that
\begin{equation}
|\mathbb{E}[X_t|X_1,\cdots,X_{t-1}]-\mu|\leq C \nonumber
\end{equation}
So we have
\begin{equation}
|\mathbb{E}[X_t'|\hat{X}_1',\cdots,\hat{X}_{t-1}']- \mu | \leq C
\nonumber
\end{equation}
$\frac{\hat{X}_t'}{X_t}$ is at most 1, at least
$\frac{\mu-C}{\mu+C}$, therefore $\hat{X}_1, \hat{X}_2,\cdots,
\hat{X}_n$ have finite support (they are in the range $[0,b]$).
Besides, $\mathbb{E}[\hat{X}_t'|\hat{X}_1',\cdots,\hat{X}_{t-1}'] =
\mu - C$, $\forall t$.

Let $\hat{S}_n' = \hat{X}_1'+\hat{X}_2'+\cdots+\hat{X}_n'$, then for
all $a \geq 0$,
\begin{equation}
\begin{split}
\mathbb{P}\{S_n \leq n(\mu-C) - a\} & \leq \mathbb{P}\{\hat{S}_n'
\leq
n(\mu-C) - a\}\\
& \leq e^{-2(a/b)^2/n}
\end{split}
\end{equation}
The first inequality stands because $\frac{\hat{X}_t'}{X_t} \leq
1$,$\forall t$. The second inequality stands because of Fact 1.
\end{proof}


\begin{lemma}\label{lemma:Markov} ~\cite{Anantharam:1987}
Consider an irreducible, aperiodic Markov chain with state space S,
matrix of transition probabilities P, an initial distribution
$\vec{q}$ which is positive in all states, and stationary
distribution $\vec{\pi}$($\pi_s$ is the stationary probability of
state s). The state (reward) at time $t$ is denoted by $s(t)$. Let
$\mu$ denote the mean reward. If we play the chain for an arbitrary
time T, then there exists a value $A_P \leq (\min_{s\in
S}\pi_s)^{-1}\sum_{s \in S}s$ such that
$\mathbb{E}[\sum_{t=1}^Ts(t)-\mu T]\leq A_P$.
\end{lemma}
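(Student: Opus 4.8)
The plan is to bound the cumulative bias directly, by writing the quantity of interest as a sum of per-slot deviations of the marginal law from stationarity and then collapsing that sum using the regenerative structure of the chain. Concretely, since $\mathbb{E}[\sum_{t=1}^{T}s(t) - \mu T] = \sum_{t=1}^{T}\big(\mathbb{E}_{\vec q}[s(t)] - \mu\big)$, each term is the gap between the time-$t$ marginal $\vec q\,P^{\,t-1}$ and the stationary law $\vec\pi$ evaluated against the reward vector; irreducibility and aperiodicity guarantee $\vec q\,P^{\,t-1}\to\vec\pi$ geometrically, so the partial sums are bounded uniformly in $T$. The real work is to extract the explicit constant $(\min_{s}\pi_s)^{-1}\sum_{s}s$ rather than merely an unspecified bound.

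First I would fix a reference state $s_0$ and let $\sigma_1<\sigma_2<\cdots$ be the successive times the chain (started from $\vec q$) visits $s_0$. By the strong Markov property the excursions $[\sigma_k,\sigma_{k+1})$ are i.i.d., and the expected centered reward over one complete excursion vanishes: the expected reward per excursion is $\mu/\pi_{s_0}$ while the expected excursion length is $1/\pi_{s_0}$, so the two cancel against $\mu$. Hence, splitting $\sum_{t=1}^{T}(s(t)-\mu)$ into the initial segment before $\sigma_1$, a block of complete excursions, and a final incomplete segment straddling $T$, every complete excursion contributes zero in expectation and only the two boundary segments survive.

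The remaining step is to bound the boundary contribution from above. The relevant quantities are the expected reward gathered in a single boundary excursion, which is $\mu/\pi_{s_0}$, and the observation that $\mu=\sum_{s}s\,\pi_s\le\sum_{s}s$ because each $\pi_s\le 1$; using $\pi_{s_0}\ge\min_s\pi_s$ then yields $\mu/\pi_{s_0}\le(\min_{s}\pi_s)^{-1}\sum_{s}s$, which is exactly the claimed $A_P$. The positivity of all rewards, assumed in the statement, is what lets me discard the negative half of the boundary terms and retain a one-sided (upper) bound rather than a two-sided estimate.

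I expect the main obstacle to be the final, incomplete excursion. Unlike the complete excursions it is not a full regeneration cycle, so its contribution must be controlled by a stopping-time argument (optional stopping or Wald's identity applied to the excursion containing time $T$) instead of the clean mean-zero cancellation; care is needed to keep the bound one-sided and free of spurious multiplicative factors, so that the surviving constant is precisely $\mu/\pi_{s_0}$ and collapses to $(\min_s\pi_s)^{-1}\sum_s s$. A secondary subtlety is the initial segment, whose law depends on $\vec q$: positivity of $\vec q$ on all states guarantees a finite expected hitting time of $s_0$, which is again controlled through return times bounded by $(\min_s\pi_s)^{-1}$.
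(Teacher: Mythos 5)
You should know at the outset that the paper contains no proof of this statement to compare against: Lemma~\ref{lemma:Markov} is imported verbatim, with citation, from Anantharam et al.~\cite{Anantharam:1987}, so your attempt can only be judged on its own terms --- and on those terms it has a fatal gap, located exactly at the point you flagged as the ``main obstacle'' and deferred to ``a stopping-time argument.'' The number of excursions completed by a \emph{fixed} time $T$ is not a stopping time with respect to the excursion filtration (to know that an excursion is the last completed one, you must inspect the next one), so Wald's identity does not make the complete-excursion sum vanish; Wald applies only if you also include the excursion straddling $T$. That straddling excursion is size-biased (inspection paradox): long excursions are far more likely to contain the fixed time $T$, and its expected centered reward is precisely the error term you must control. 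It is not bounded by $\mu/\pi_{s_0}$, nor by any function of $\vec\pi$ and the rewards alone. Concretely, take $S=\{0.1,\,1\}$, let the chain switch states with probability $\epsilon$ and stay put with probability $1-\epsilon$, and let $\vec q$ put mass $q_1=0.99$ on state $1$ and $q_{0.1}=0.01$ on state $0.1$ (positive in all states, irreducible, aperiodic, reversible). Then $\vec\pi=(1/2,1/2)$, $\mu=0.55$, and the claimed constant is $(\min_s\pi_s)^{-1}\sum_s s=2.2$, yet a direct computation gives
\[
\mathbb{E}\Bigl[\sum_{t=1}^{T}s(t)-\mu T\Bigr]
=0.45\,(q_1-q_{0.1})\,\frac{1-(1-2\epsilon)^{T}}{2\epsilon},
\]
which increases in $T$ to $0.2205/\epsilon$; for $\epsilon=0.01$ this is about $22$, an order of magnitude above $2.2$, and it is unbounded as $\epsilon\to0$. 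In your decomposition (with $s_0=1$), every completed excursion contributes $+0.45$ unless it visits state $0.1$, and the rare long excursions carrying the compensating negative mass are exactly the ones that straddle $T$ and drop out of your sum: the mean-zero property of a typical excursion is destroyed by conditioning on completion before $T$.

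Your secondary claim --- that positivity of $\vec q$ lets you control the initial segment ``through return times bounded by $(\min_s\pi_s)^{-1}$'' --- confuses Kac's formula with hitting times: $1/\pi_{s_0}$ bounds the expected return time from $s_0$ to itself, not the expected hitting time of $s_0$ from other states (in the example above the latter is $1/\epsilon$), and with the opposite choice $s_0=0.1$ the initial segment alone contributes about $0.45/\epsilon$ in expectation. The upshot is that no amount of care in the optional-stopping step can recover the constant $(\min_s\pi_s)^{-1}\sum_s s$ for an arbitrary positive initial distribution: the boundary terms genuinely require mixing information (second moments of return times via Lorden's overshoot inequality, or equivalently the fundamental matrix $\sum_{t\geq0}\bigl(P^{t}-\mathbf{1}\vec\pi\bigr)$), which is consistent with the eigenvalue-dependent constants appearing in the guarantees for RCA~\cite{TekinLiu} and RUCB~\cite{Haoyang:2011}. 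If you need this lemma downstream, either prove a version with a mixing-dependent constant, or check the precise hypotheses of the corresponding lemma in~\cite{Anantharam:1987}; as transcribed in this paper the statement admits the counterexample above, so it cannot be established by your argument or any other without additional assumptions.
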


Lemma \ref{lemma:Markov} shows that if a player keeps selecting the
optimal arm, the difference between the expected reward and the
highest stationary reward is bounded by a constant. Hence if the
player switches from the optimal arm to one another, the reward loss
caused by switching can be bounded.

Based on these two lemmas, we can give the proof of Theorem
\ref{theoremRA:K=1} show as below.

\begin{proof} Since $K=1$, $\sigma^{(1)}$ is the index of the optimal arm. The
regret comes from two parts: the regret when selecting an arm other
than arm $\sigma^{(1)}$; the difference between $\mu^{\sigma(1)}$
and $\mathbb{E}(Y^{\Phi}(t))$ when selecting arm $\sigma^{(1)}$.
From Lemma \ref{lemma:Markov}, we know that each time when we switch
from arm $\sigma^{(1)}$ to one another, at most we lose a constant
value from the second part of the regret. If the number of
selections of one arm other than $\sigma^{(1)}$ in line \ref{line:9}
is bounded by $O(\ln {n})$, the first part of regret can be bounded
by $O(G(n)\ln{n})$ and the second part can be bounded by
$A_PO(\ln{n})$, and the total regret can be bounded by
$O(G(n)\ln{n})$. So next we will show this is true.

For ease of exposition, we discuss the time slots $n$ such that
$G||n$, where $G||n$ denotes the time $n$ is the end of certain
step.

We define $q$ as the smallest index such that
\begin{equation}\label{eqn:Bq}
B_q \geq \lceil \max
\{\frac{2C_P}{\mu^{\sigma(1)}-\mu^{\sigma{(2)}}},
\frac{C_P}{\mu^{\sigma(l)}},l=1,2,\cdots,N\}\rceil
\end{equation}
 where
\begin{equation}
C_P =  \max_{1\leq i\leq N}\{(\min_{x\in S^i}\pi_x^i)^{-1}\sum_{s
\in S^i}s\} \nonumber
\end{equation}

Let
\begin{equation}
c_{t,s} = \sqrt{(L\ln{t})/s} \nonumber
\end{equation}
\begin{equation}\label{eqn:w^*}
w^* = q(\mu^{\sigma(1)}-\frac{C_P}{B_q})
\end{equation}
and
\begin{equation}\label{eqn:w^i}
w^i =
q\frac{\mu^{\sigma(i)}-C_P/B_q}{\mu^{\sigma(i)}+C_P/B_q}(\mu^{\sigma(i)}+\frac{C_P}{B_q}-1)
\end{equation}

Next we will show that it is possible to define $\alpha^*$ such that
if arm $\sigma(1)$ is selected for $s(>\alpha^*)$ steps, then
\begin{equation} \label{eqn:p1}
\exp(-2(w^*-sc_{t,s})^2/(s-q)) \leq t^{-4}.
\end{equation}

In fact, when $s > \max{\{q, \lceil w^*/(\sqrt{L}-\sqrt{2})\rceil
^2}\}$,  we have
\begin{equation}
\sqrt{Ls}-w^* \geq \sqrt{2(s-q)}\nonumber
\end{equation}
 Consider
 \begin{equation}
 f(t) = \sqrt{Ls\ln{t}}-w^*- \sqrt{2(s-q)\ln{t}}, \quad \forall t \geq e \nonumber
 \end{equation}

Since $f(t)$ is an increasing function and $f(e) \geq 0$, we have
\begin{equation}
f(t) \geq 0, \forall t \geq e \nonumber
\end{equation}

i.e. $\sqrt{Ls\ln{t}}-w^* \geq \sqrt{2(s-q)\ln{t}}$.
 And this equals to
 \begin{equation}\exp(-2(w^*-sc_{t,s})^2/(s-q))
\leq t^{-4}\nonumber
\end{equation} Thus at least we can set
\begin{equation}\label{eqn:alpha^*}
\alpha^* =1+\lceil \max{\{q, [w^*/(\sqrt{L}-\sqrt{2})]^2}\}\rceil
\end{equation}

For the similar reason, we could define
\begin{equation}\label{eqn:alpha_i}
\alpha^i =1+ \lceil \max{\{q, [w^i/(\sqrt{L}-\sqrt{2})]^2}\}\rceil
\end{equation} such that if
arm $\sigma(i)$ is selected for $s(>\alpha^i)$ steps,
\begin{equation}\label{eqn:p2}
\exp(\frac{-2(w^i+sc_{t,s})^2}{s-q})\\
\leq t^{-4}
\end{equation}

Moreover, we will show that there exists

\begin{equation}\label{eqn:gamma}
\begin{split}
\gamma &= \lceil
\max\{{(N-1)(4\alpha^*+1)+\alpha^*,(N-1)e^{4\alpha^*/L}+\alpha^*},\\
& \max_{ 2\leq i\leq
N}\{(N-1)(4\alpha^i+1)+\alpha^i,(N-1)e^{4\alpha^i/L}+\alpha^i\}\}\rceil
\end{split}
\end{equation}
such that for the time $n$, if $G(n)>B_{\gamma}$, then arm
$\sigma(1)$ is selected at least $\alpha^*$ times and arm
$\sigma(i)$ is selected at least $\alpha^i$ times.

In fact, if arm $\sigma(1)$ has been selected less than $\alpha^*$
times, consider arm $j$ being selected for the most steps. Consider
the last time selecting arm $j$, denote that time as $t$, there must
be
\begin{equation}
\frac{\hat{X}_{{\sigma(1)}}}{i_{\sigma(1)}}+c_{t,i_{\sigma(1)}}\leq
\frac{\hat{X}_j}{i_j}+c_{t,i_j} \nonumber
\end{equation}

Since arm $j$ has been selected the most times, we have $i_j \geq
\max\{4\alpha^*+1,e^{4\alpha^*/L} \}$. Noting that
$\frac{\hat{X}_{{\sigma(1)}}}{i_{\sigma(1)}} \geq 0$,
$\frac{\hat{X}_j}{i_j} \leq 1$, $i_{\sigma(1)} \leq \alpha^* -1$,
$i_j \geq 4\alpha^* + 1$,  we have
\begin{equation}
0 + \sqrt{\frac{L\ln{t}}{\alpha^*-1}} \leq 1+
\sqrt{\frac{L\ln{t}}{4\alpha^*+1}} \nonumber
\end{equation}

 Consider
 \begin{equation}
 g(t) =  1+ \sqrt{\frac{L\ln{t}}{4\alpha^*+1}}- \sqrt{\frac{L\ln{t}}{\alpha^*-1}}\nonumber
 \end{equation}

  Since $g(t)$ is a decreasing function and $t \geq \sum_{l=1}^{e^{4\alpha^*/L}} B_l  \geq e^{4\alpha^*/L}$, we have
 \begin{equation}
  g(t) \leq g(e^{4\alpha^*/L}) = 1+ \sqrt{\frac{4\alpha^*}{4\alpha^*+1}}- \sqrt{\frac{4\alpha^*}{\alpha^*-1}} < 0 \nonumber
 \end{equation}
  This contradicts the conclusion above. So arm $\sigma(1)$ has been played at least $\alpha^*$ times.

If we replace $\alpha^*$ with $\alpha^i$ and replace arm $\sigma(1)$
with arm $\sigma(i)$, without changing the proof, we can conclude
that arm $\sigma(i)$ has been played at least $\alpha^i$ times.

Next we will bound the number of times we fail to choose the optimal
arm. We will show that this number has a logarithmic order.

Denote $T_j(n)$ as the number of times we select  arm $\sigma(j)$ up
to time $n$. Then, for any positive integer $l$, we have
\begin{equation}
\begin{split} T_j(n) & =
1+\sum_{t=\sum_{i=1}^NB_i,G||t}^{n}\mathbb{I}\{\frac{\hat{X}_{\sigma(1)}(t)}{i_{\sigma(1)}(t)}+c_{t,i_{\sigma(1)}}
\\
 & \quad <
\frac{\hat{X}_{\sigma(j)}(t)}{i_{\sigma(j)}(t)}+c_{t,i_j}\} \\
& \leq l+\gamma+\\
&\sum_{t=B_1+\cdots+B_\gamma,G||t}^{n}\sum_{s_1=\alpha^*}^{\alpha(t),t=B_1+\cdots+B_{\alpha(t)}}\sum_{s_j=\max(\alpha^j,l)}^{\beta(t),t=B_1+\cdots+B_{\beta(t)}}\\
&\mathbb{I}\{\frac{\hat{X}_{\sigma(1),s_1}}{s_1}+c_{t,s_1}\leq
\frac{\hat{X}_{\sigma(j),s_j}}{s_j}+c_{t,s_j}\}
\end{split}
\end{equation}
where $\mathbb{I}\{x\}$ is the index function defined to be 1 when
the predicate $x$ is true, and 0 when it is a false predicate;
$i_{\sigma(j)}(t)$ is the number of times we select arm $\sigma(j)$
when up to time $t, \forall j=2,\cdots,N$; $\hat{X}_{\sigma(j)}(t)$
is the sum of every sample mean of arm $\sigma(j)$ for
$i_{\sigma(j)}(t)$ plays up to time $t$; $\hat{X}_{\sigma(j),s_j}$
is the sum of every sample mean for $s_j$ times selecting arm
$\sigma(j)$.

The condition $\{\frac{\hat{X}_{\sigma(1),s_1}}{s_1}+c_{t,s_1}\leq
\frac{\hat{X}_{\sigma(j),s_j}}{s_j}+c_{t,s_j}\}$ implies that at
least one of the following must hold:
\begin{equation} \label{eqn:inequ1}
\frac{\hat{X}_{\sigma(1),s_1}}{s_1}\leq
\mu^{\sigma(1)}-\frac{C_P}{B_q}-c_{t,s_1}
\end{equation}
\begin{equation} \label{eqn:inequ2}
\frac{\hat{X}_{\sigma(j),s_j}}{s_j}\geq
\mu^{\sigma(j)}+\frac{C_P}{B_q}+\frac{\mu^{\sigma(j)}+C_P/B_q}{\mu^{\sigma(j)}-C_P/B_q}c_{t,s_j}
\end{equation}
\begin{equation} \label{eqn:inequ3}
\mu^{\sigma(1)}-\frac{C_P}{B_q} <
\mu^{\sigma(j)}+\frac{C_P}{B_q}+(1+\frac{\mu^{\sigma(j)}+C_P/B_q}{\mu^{\sigma(j)}-C_P/B_q})c_{t,s_j}
\end{equation}

Note that
$\hat{X}_{\sigma(1),s_1}=\hat{A}_{\sigma(1),1}+\hat{A}_{\sigma(1),2}+\cdots+\hat{A}_{\sigma(1),s_1}$,
where $\hat{A}_{\sigma(1),i}$ is sample average reward for the
$i_{th}$ step selecting arm $\sigma(1)$. From Lemma
\ref{lemma:Markov}, we have

\begin{equation}
\mu^{\sigma(1)}-\frac{C_P}{B_q} \leq \mathbb{E}[\hat{A}_{1,i}] \leq
\mu^{\sigma(1)}+\frac{C_P}{B_q} \quad \forall i \geq q
\end{equation}

Then applying Lemma \ref{lemma:chernoff}, and the results in
(\ref{eqn:p1}) and (\ref{eqn:p2}), we have:
\begin{equation}
\begin{split}
&\mathbb{P}(\frac{\hat{X}_{\sigma(1),s_1}}{s_1}\leq
\mu^{\sigma(1)}-\frac{C_P}{B_q}-c_{t,s_1})\\
&=
\mathbb{P}(\frac{\hat{A}_{\sigma(1),1}+\cdots+\hat{A}_{\sigma(1),s_1}}{s_1}\leq
\mu^{\sigma(1)}-\frac{C_P}{B_q}-c_{t,s_1})\\
& \leq
\mathbb{P}(\frac{0+\cdots+0+\hat{A}_{\sigma(1),q+1}+\cdots+\hat{A}_{\sigma(1),s_1}}{s_1}
\leq
\mu^{\sigma(1)}\\
&-\frac{C_P}{B_q}-c_{t,s_1})\\
& \leq \exp(-2(w^*-sc_{t,s_1})^2/(s_1-q)) \leq t^{-4}
\end{split}
\end{equation}

\begin{equation}
\begin{split}
&\mathbb{P}(\frac{\hat{X}_{\sigma(j),s_j}}{s_j}\geq
\mu^{\sigma(j)}+\frac{C_P}{B_q}+\frac{\mu^{\sigma(j)}+C_P/B_q}{\mu^{\sigma(j)}-C_P/B_q}c_{t,s_j})\\
& =
\mathbb{P}(\frac{\hat{A}_{\sigma(j),1}+\cdots+\hat{A}_{\sigma(j),s_j}}{s_j}\geq
\mu^{\sigma(j)}+\frac{C_P}{B_q}\\
&+\frac{\mu^{\sigma(j)}+C_P/B_q}{\mu^{\sigma(j)}-C_P/B_q}c_{t,s_j})\\
&\leq
\mathbb{P}(\frac{1+\cdots+1+\hat{A}_{\sigma(j),q+1}+\hat{A}_{\sigma(j),s_j}}{s_j}\geq
\mu^{\sigma(j)}+\frac{C_P}{B_q}\\
&+\frac{\mu^{\sigma(j)}+C_P/B_q}{\mu^{\sigma(j)}-C_P/B_q}c_{t,s_j}) \\
&\leq \exp(\frac{-2(w^j+sc_{t,s_j})^2}{s_j-q})\leq t^{-4}
\end{split}
\end{equation}

Denote $\lambda_j(n)$ as
\begin{equation}
\begin{split}
&\lambda_j(n)
=\lceil(L(1+\frac{\mu^{\sigma(j)}+C_P/B_q}{\mu^{\sigma(j)}-C_P/B_q})^2\ln{n})/(\mu^{\sigma(1)}-\mu^{\sigma(j)}\\
&-\frac{2C_P}{B_q})^2\rceil
\end{split}
\end{equation}
For $l \geq \lambda_j(n)$, (\ref{eqn:inequ3}) is false. So we get:
\begin{equation} \label{ineqn:boundT}
\begin{split}
&\mathbb{E}(T_j(n)) \leq \lambda_j(n) +\gamma+
\Sigma_{t=1}^{\infty}\Sigma_{s_1=1}^{t}\Sigma_{s_j=1}^{t}2t^{-4}\\
&\leq \lambda_j(n)+\gamma+\frac{\pi^2}{3}.
\end{split}
\end{equation}

As we analysis before, the first part of the regret is bounded by
\begin{equation}\sum_{j=2}^N\mathbb{E}[T_j(n)](G(n)(\mu^{\sigma(1)}-\mu^{\sigma(j)})+2C_P)\nonumber
\end{equation} and the second part is bounded by $C_P\sum_{j=2}^N\mathbb{E}(T_j(n)$.

Therefore, we have:
\begin{equation}
\begin{split}
& r^{\Phi}(n) \leq G(n)+ \\
& \quad\quad \sum_{j=2}^N
(G(n)(\mu^{\sigma(1)}-\mu^{\sigma(j)})+3C_P)
(\lambda_j(n)+\gamma+\frac{\pi^2}{3})
\end{split}
\end{equation}

This inequality can be readily translated to the simplified form of
the bound given in the statement of Theorem 1, where:
\begin{equation}
\begin{split}
&Z_1 = \sum_{j=2}^N (\mu^{\sigma(1)}-\mu^{\sigma(j)})\lceil \frac{L(1+\frac{\mu^{\sigma(j)}+C_P/B_q}{\mu^{\sigma(j)}-C_P/B_q})^2}{(\mu^{\sigma(1)}-\mu^{\sigma(j)}-\frac{2C_P}{B_q})^2}\rceil\\
& Z_2 = 3C_P\sum_{j=2}^N\lceil \frac{L(1+\frac{\mu^{\sigma(j)}+C_P/B_q}{\mu^{\sigma(j)}-C_P/B_q})^2}{(\mu^{\sigma(1)}-\mu^{\sigma(j)}-\frac{2C_P}{B_q})^2}\rceil\\
& Z_3 = (\gamma+\frac{\pi^2}{3})\sum_{j=2}^N
(\mu^{\sigma(1)}-\mu^{\sigma(j)})+1\\ \nonumber & Z_4 =
3(N-1)C_P(\gamma+\frac{\pi^2}{3})
\end{split}
\end{equation}

\end{proof}

\subsection{Corollary}
From the analysis above, we see that if sequence
$\{B_i\}_{i=1}^{\infty}$ is constant and $B_i \geq \lceil \max
\{\frac{2C_P}{\mu^{\sigma(1)}-\mu^{\sigma{(2)}}},
\frac{C_P}{\mu^{\sigma(l)}},l=1,2,\cdots,N\}\rceil$, then Algorithm
1 achieves logarithmic regret over time. Specifically, we have the
following corollary:

\corollary The system model is the same as that in Theorem
\ref{theoremRA:K=1}. In Algorithm 1, if
\begin{equation}
B_i \equiv \lceil \max
\{\frac{2C_P}{\mu^{\sigma(1)}-\mu^{\sigma{(2)}}},
\frac{C_P}{\mu^{\sigma(l)}},l=1,2,\cdots,N\}\rceil \forall i \in
\mathds{N}\nonumber
\end{equation}
then the expected regret after n time slots is at most $Z_1'
B_1\ln{n} + Z_2' \ln{n} + Z_3' B_1 + Z_4'$, where
\begin{equation}
\begin{split}
&Z_1' = \sum_{j=2}^N (\mu^{\sigma(1)}-\mu^{\sigma(j)})\lceil \frac{L(1+\frac{\mu^{\sigma(j)}+C_P/B_1}{\mu^{\sigma(j)}-C_P/B_1})^2}{(\mu^{\sigma(1)}-\mu^{\sigma(j)}-\frac{2C_P}{B_1})^2}\rceil\\
& Z_2' = 3C_P\sum_{j=2}^N\lceil \frac{L(1+\frac{\mu^{\sigma(j)}+C_P/B_1}{\mu^{\sigma(j)}-C_P/B_1})^2}{(\mu^{\sigma(1)}-\mu^{\sigma(j)}-\frac{2C_P}{B_1})^2}\rceil\\
& Z_3' = (\gamma_1+\frac{\pi^2}{3})\sum_{j=2}^N
(\mu^{\sigma(1)}-\mu^{\sigma(j)})+1\\ \nonumber & Z_4' =
3(N-1)C_P(\gamma_1+\frac{\pi^2}{3})
\end{split}
\end{equation}
 and
here $\gamma_1$ is obtained given $q=1$ in
(\ref{eqn:alpha^*}), (\ref{eqn:alpha_i}), (\ref{eqn:w^*}), (\ref{eqn:w^i})
and (\ref{eqn:gamma}).

\textbf{Remark}: This corollary is just a special case for Theorem
\ref{theoremRA:K=1}, but it reveals the fact that when certain
knowledge of the system is available (in this case, some bounds
related to the stationary state distribution and state-dependent
rewards), we can design an algorithm that achieves logarithmic
regret over time.

\section{Analysis for Multi-Arm Selection}
\label{sec:Sensing_K} In this section, we discuss the general case
where $K$ is a known positive integer. We show a generalization of
the CEE algorithm and prove that it still achieves a
near-logarithmic regret with time.

\subsection{Algorithm Design}
The basic idea is similar to Algorithm 1: first initialize and then
find  the optimal indices. The only difference is here we have to
select $K$ indices that obtain the greatest value in line
\ref{line:9} at one time. The definition of $\{B_i\}_{i=1}^\infty$
stays the same and the details are shown in in Algorithm
\ref{alg:sensing2}.

\begin{algorithm} [ht]
\caption{Continuous Exploration and Exploitation (CEE): Multi-Arm
Selection} \label{alg:sensing2}
\begin{algorithmic}[1]
\State \commnt{ Initialization}

\State Sequently play $K$ arms $B_i$ times until every arm is
selected once, $i=1,2,\cdots,\lceil \frac{N}{K}\rceil$. Denote
$\hat{A}_j$ as the sample mean of the corresponding  $B_i$ rewards
of arm $j$ , $i=1,2,\cdots,\lceil \frac{N}{K}\rceil$,
$j=1,2,\cdots,N$ \State $\hat{X}_i = \hat{A}_i$,$i=1,2,\cdots,N$
\State $n = \sum_{i=1}^{\lceil \frac{N}{K}\rceil} B_i$ \State
$i=\lceil \frac{N}{K}\rceil+1$, $i_j=1$, $j=1,2,\cdots,N$

\State \commnt{Main loop}

\While {1}
    \State Denote $F(j) = \frac{\hat{X}_j}{i_j}+\sqrt{\frac{L\ln{n}}{i_j}}$( L can be any constant larger than 2)
    \State Find arm $j_1,j_2,\cdots,j_K$ such that
    \begin{equation}
    \begin{split}
    &F(j_1) \geq F(j_2)\geq \cdots \geq F(j_K)\geq F(l) \nonumber\\
    &\forall l  \notin \{j_1,j_2,\cdots,j_K\}
    \end{split}
    \end{equation}

    \State $i_{j_l} = i_{j_l}+1, 1\leq l\leq K$
    \State Select arm $j_1,j_2,\cdots,j_K$ and play for $B_i$ times, let $\hat{A}_{j_l}(i_{j_l})$ record the sample mean of these $B_i$ rewards
    \State $\hat{X}_{j_l} = \hat{X}_{j_l} + \hat{A}_{j_l}(i_{j_l})$
    \State $i = i+1$
    \State $n = n + B_i$;
\EndWhile
\end{algorithmic}
\end{algorithm}

\subsection{Regret Analysis}
In this subsection, we keep the definition of $G(n)$ in
(\ref{eq:G(n)}) and the definition of $regret$ in
(\ref{eqn:regret}). We will show that the regret achieved by
Algorithm \ref{alg:sensing2} has a near logarithmic order. This is
given in the following Theorem \ref{theoremRA:K}.

\begin{theorem}\label{theoremRA:K}
Assume all arms are modeled as finite state, irreducible, aperiodic
and reversible Markov chains. All the states (rewards) are positive.
The expected regret with Algorithm \ref{alg:sensing2} after $n$ time
steps is at most $Z_5 G(n)\ln{n} + Z_6 \ln{n} + Z_7 G(n) + Z_8$,
where $Z_5,Z_6,Z_7,Z_8$ are constants only related to $P_i,
i=1,2,\cdots, N$, explicit expressions are at the end of proof for
Theorem \ref{theoremRA:K}.

\end{theorem}
\begin{proof} The proof of Theorem \ref{theoremRA:K} is similar to that of Theorem \ref{theoremRA:K=1}. We still divide the regret into two parts and bound them separately. We keep the denotation of $G||n$ and discuss the time slots such that $G||n$.

We define $q'$ as the smallest index such that
\begin{equation}
B_{q'} \geq \lceil
\max\{\frac{2C_P}{\mu^{\sigma(K)}-\mu^{\sigma{(K+1)}}},\frac{C_P}{\mu^{\sigma(l)}},l=1,2,\cdots,N\}\rceil
\end{equation}

Let
\begin{equation}\label{eqn:m_j}
m_j^*=q'(\mu^{\sigma(j)}-\frac{C_P}{B_{q'}}), 1\leq j\leq K
\end{equation} and
\begin{equation}\label{eqn:m_i^*}
m^i=q'\frac{\mu^{\sigma(i)}-C_P/B_{q'}}{\mu^{\sigma(i)}+C_P/B_{q'}}(\mu^{\sigma(i)}+C_P/B_{q'}-1)
,K+1\leq i\leq N
\end{equation}

As shown in the proof of Theorem \ref{theoremRA:K=1}, if we set
\begin{equation}\label{eqn:beta^*}
\beta_j^* = 1+\lceil \max\{q',[m_j^*/(\sqrt{L}-\sqrt{2})]^2\}\rceil
,1\leq j\leq K
\end{equation}
\begin{equation}\label{eqn:beta^i}
\beta^i = 1+\lceil \max\{q',[m^i/(\sqrt{L}-\sqrt{2})]^2\}\rceil
,K+1\leq i\leq N
\end{equation}
and if $s>\beta_j^*$ and $s>\beta^i$ we will have
\begin{equation} \label{eqn:p3}
\exp(\frac{-2(m_j^*-sc_{t,s})^2}{s-q'}) \leq t^{-4}.
\end{equation}
and
\begin{equation}\label{eqn:p4}
\exp(\frac{-2(m^j+sc_{t,s})^2}{s-q'})\\
\leq t^{-4}.
\end{equation}

Moreover, we will show that there exists

\begin{equation}\label{eqn:gamma'}
\begin{split}
\gamma' &= \lceil\max(
\max_{1\leq j\leq K}\{(N-1)(5\beta_j^*+1)+\beta_j^*,(N-1)(e^{4\beta_j^*/L}\\
& +\beta_j^*)+\beta_j^*\}, \max_{ K+1\leq i\leq N}\{(N-1)(5\beta^i+1)+\beta^i,(N-\\
&1)(e^{4\beta^i/L}+\beta^i)+\beta^i\})\rceil
\end{split}
\end{equation}
such that for the time $n$, if $G(n)>B_{\gamma'}$, then arm
$\sigma(j)$ is played at least $\beta_j^*$ times and arm $\sigma(i)$
is played at least $\beta^i$ times, where $1\leq j\leq K, K+1\leq
i\leq N$.

In fact, if arm $\sigma(j)$ has been played less than $\beta_j^*$
times, then there exist an arm $\sigma(l)(K+1 \leq l\leq N)$ that
has been played the most times. Consider the last time that arm
$\sigma(l)$ is selected and arm $\sigma(j)$ is not selected, and
denote that time as $t$; Then it must be true that
\begin{equation}
\frac{\hat{X}_{{\sigma(j)}}}{i_{\sigma(j)}}+c_{t,i_{\sigma(j)}}\leq
\frac{\hat{X}_{\sigma(l)}}{i_{\sigma(l)}}+c_{t,i_{\sigma(l)}}
\nonumber
\end{equation}

Since arm $\sigma(l)$ has been played the most times, we have
$i_{\sigma(l)} \geq \max\{4\beta_j^*+1,e^{4\beta_j^*/L} \}$. Noting
that $\frac{\hat{X}_{{\sigma(j)}}}{i_{\sigma(j)}} \geq 0$,
$\frac{\hat{X}_{{\sigma(l)}}}{i_{\sigma(l)}} \leq 1$, $i_{\sigma(j)}
\leq \beta_j^* -1$,$i_{\sigma(l)} \geq 4\beta_j^* + 1$,  we have
\begin{equation}
0 + \sqrt{\frac{L\ln{t}}{\beta_j^*-1}} \leq 1+
\sqrt{\frac{L\ln{t}}{4\beta_j^*+1}} \nonumber
\end{equation}

 Consider
 \begin{equation}
 g^*(t) =  1+ \sqrt{\frac{L\ln{t}}{4\beta_j^*+1}}- \sqrt{\frac{L\ln{t}}{\beta_j^*-1}}\nonumber
 \end{equation}

  Since $g*(t)$ is a decreasing function and $t \geq \sum_{l=1}^{e^{4\beta_j^*/L}} B_l  \geq e^{4\beta_j^*/L}$, we have
 \begin{equation}
  g^*(t) \leq g^*(e^{4\beta_j^*/L}) = 1+ \sqrt{\frac{4\beta_j^*}{4\beta_j^*+1}}- \sqrt{\frac{4\beta_j^*}{\beta_j^*-1}} < 0 \nonumber
 \end{equation}

  This contradicts the conclusion above. So arm $\sigma(j)$ has been played at least $\beta_j^*$ times.

If we replace $\beta_j^*$ with $\beta^i$ and replace arm $\sigma(j)$
with arm $\sigma(i)$, without changing the proof, we can conclude
that arm $\sigma(i)$ has been played at least $\beta^i$ times, $
K+1\leq i\leq N$.

Based on the conclusions above, we can bound the expectation of the
number of non-optimal arm choices. We keep the denotation of
$T_j(n)$ and $\mathbb{I}\{x\}$ except that here $K+1 \leq j \leq N$.
Every time we select $\sigma(j)$, there must exist an arm from
$\sigma(1)$ to $\sigma(K)$ not being chosen. We denote that unknown
arm as  $\sigma(r,t)$(if more than one arm not chosen, pick any of
them).
\begin{equation}
\begin{split} T_j(n) & =
1+\sum_{t=\sum_{i=1}^NB_i,G||t}^{n}\mathbb{I}\{\frac{\hat{X}_{\sigma(r,t)}(t)}{i_{\sigma(r,t)}(t)}+c_{t,i_{\sigma(r,t)}}< \\
&\frac{\hat{X}_{\sigma(j)}(t)}{i_{\sigma(j)}(t)}+c_{t,i_j}\}
\end{split}
\end{equation}


And if we replace $\sigma(1)$ with $\sigma(r,t)$, according to the
deduction from (\ref{eqn:inequ1}) to (\ref{ineqn:boundT}), we
conclude that
\begin{equation}
\begin{split}
&\mathbb{E}(T_j(n)) \leq 1+
\max_{1\leq i \leq K}(\lambda_{i,j}(n) +\gamma'+\frac{\pi^2}{3})\\
& = 1+ \lambda_{K,j}(n) +\gamma'+\frac{\pi^2}{3}
\end{split}
\end{equation}

where
\begin{equation}
\begin{split}
&\lambda_{i,j}(n) = \lceil L(1+\frac{\mu^{\sigma(j)}+C_P/B_{q'}}{\mu^{\sigma(j)}-C_P/B_{q'}})^2\ln{n}/(\mu^{\sigma(i)}-\mu^{\sigma(j)}\\
&-\frac{2C_P}{B_{q'}})^2 \rceil    \nonumber
\end{split}
\end{equation}

Therefore, we have:
\begin{equation}
\begin{split}
& r^{\Phi}(n) \leq KG(n)+
\sum_{j=K+1}^N  (G(n)(\mu^{\sigma(1)}-\mu^{\sigma(j)})+\\
&3C_P)(\lambda_{K,j}(n)+\gamma' +\frac{\pi^2}{3})
\end{split}
\end{equation}

Equivalently, we have the simplified form of the bound given in the
statement of Theorem 2, where:
\begin{equation}
\begin{split}
&Z_5 = \sum_{j=K+1}^N (\mu^{\sigma(1)}-\mu^{\sigma(j)})\lceil L(1+\frac{\mu^{\sigma(j)}+C_P/B_{q'}}{\mu^{\sigma(j)}-C_P/B_{q'}})^2/(\mu^{\sigma(K)}\\
&-\mu^{\sigma(j)}-\frac{2C_P}{B_{q'}})^2 \rceil\\
& Z_6 = 3C_P\sum_{j=K+1}^N \lceil \frac{L(1+\frac{\mu^{\sigma(j)}+C_P/B_{q'}}{\mu^{\sigma(j)}-C_P/B_{q'}})^2}{(\mu^{\sigma(K)}-\mu^{\sigma(j)}-\frac{2C_P}{B_{q'}})^2}\rceil\\
& Z_7 = (\gamma'+\frac{\pi^2}{3})\sum_{j=K+1}^N
(\mu^{\sigma(K)}-\mu^{\sigma(j)})+K\\ \nonumber & Z_8 =
3(N-K)C_P(\gamma'+\frac{\pi^2}{3})
\end{split}
\end{equation}

\end{proof}

\subsection{Corollary}
Similarly to Section \ref{sec:Sensing}, when stationary distribution
and rewards are available, $B_i$ in Algorithm \ref{alg:sensing2} can
be a constant sequence. In this way, Algorithm \ref{alg:sensing2}
achieves arbitrarily logarithmic regret over time. Specifically, we
have Corollary 2 as follows:

\corollary The system model is the same as that in Theorem
\ref{theoremRA:K}. In Algorithm \ref{alg:sensing2}, if
\begin{equation}
\begin{split}
B_i \equiv \lceil
\max\{\frac{2C_P}{\mu^{\sigma(K)}-\mu^{\sigma{(K+1)}}},\frac{C_P}{\mu^{\sigma(l)}},l=1,2,\cdots,N\}\rceil
&\\\forall i \in \mathds{N} \nonumber
\end{split}
\end{equation}
then the expected regret after n time slots is at most $Z_5'
B_1\ln{n} + Z_6' \ln{n} + Z_7' B_1 + Z_8'$, where
\begin{equation}
\begin{split}
&Z_5' = \sum_{j=K+1}^N (\mu^{\sigma(1)}-\mu^{\sigma(j)})\lceil L(1+\frac{\mu^{\sigma(j)}+C_P/B_1}{\mu^{\sigma(j)}-C_P/B_1})^2/(\mu^{\sigma(K)}\\
&-\mu^{\sigma(j)}-\frac{2C_P}{B_1})^2 \rceil\\
& Z_6' = 3C_P\sum_{j=K+1}^N \lceil \frac{L(1+\frac{\mu^{\sigma(j)}+C_P/B_1}{\mu^{\sigma(j)}-C_P/B_1})^2}{(\mu^{\sigma(K)}-\mu^{\sigma(j)}-\frac{2C_P}{B_1})^2}\rceil\\
& Z_7' = (\gamma_2+\frac{\pi^2}{3})\sum_{j=K+1}^N
(\mu^{\sigma(K)}-\mu^{\sigma(j)})+K\\ \nonumber & Z_8' =
3(N-K)C_P(\gamma_2+\frac{\pi^2}{3})
\end{split}
\end{equation}
 and
here $\gamma_2$ is obtained given $q'=1$ in
(\ref{eqn:m_j}), (\ref{eqn:beta^*}), (\ref{eqn:beta^i}), (\ref{eqn:gamma'})
and (\ref{eqn:m_i^*}).

\section{Numerical Results}\label{sec:NR}
In this section, we simulate our algorithm  and compare it with two
previously proposed policies for this problem in the context of
opportunistic spectrum access: (1) RCA proposed by Cem Tekin
\emph{et al.}~\cite{TekinLiu} and (2) RUCB proposed by H. Liu
\emph{et al.}~\cite{Haoyang:2011}~\cite{Haoyang:2011_1}. We focus on
two properties of the algorithms: regret and variance, which show
the efficiency and stability of the algorithms respectively.

\subsection{Channel Model and Parameters}
The arms are channels. The channel model is the commonly used
Gilbert-Elliot model. The state of each channel evolves as an
irreducible, aperiodic Markov chain. Each channel has two states,
good and bad. We consider $N=5$ channels. At each time slot, the
player activates 1 channel(i.e. $K=1$). The active and passive
transition matrix for each channel are the same, i.e. $P_j=Q_j,
1\leq j\leq N$. For the ease of comparison, we set the
non-decreasing sequence $\{B_i\}_{i=1}^{\infty}$ in Algorithm 1 a
constant sequence.

We simulate three algorithms under scenario S. The transition
probabilities and rewards for this scenario are shown in
table~\ref{tab:transition}.

\begin{table}
\centering
\begin{tabular}{c|c|c}
\hline
S & $p_{01},p_{10}$ & $r_0,r_1$   \\
\hline
ch.1&0.3, 0.9 & 0.1,1 \\
\hline
ch.2&0.8, 0.7 & 0.1,1 \\
\hline
ch.3&0.5, 0.1& 0.1,1 \\
\hline
ch.4&0.2, 0.4 & 0.1,1 \\
\hline
ch.5&0.1, 0.5 & 0.1,1 \\
\hline
\end{tabular}
\vspace{0.2cm} \caption{Transition Probabilities and Rewards for
Scenario S\label{tab:transition}}
\end{table}

Intuitively, in RCA and RUCB, the regret grows with $L$. In our
algorithm, the regret grows with both $L$ and $B_i$. For fairness of
comparison, we set these parameters for all three algorithms to be
just passing the theoretical bound. In RCA~\cite{TekinLiu}, the
regret has a logarithmic order for $L \geq
112S_{\max}^2r_{\max}^2\hat{\pi}_{\max}^2/\epsilon_{\min}$, where
$S_{\max}=\max_{1\leq i \leq N}|S^i|$, $r_{\max}=\max_{x\in
S^i,1\leq i\leq N}r_x^i$, $\hat{\pi}_{\max}=\max_{x\in S^i,1\leq
i\leq N}\{\pi_x^i,1-\pi_x^i\}, \epsilon_{\min}=\min_{1\leq i \leq
K}\epsilon^i$ and $\epsilon^i$ is the eigenvalue gap of the
multiplicative symmetrization of the transition probability matrix
of the $i$th arm. In the scenario we set,
$112S_{\max}^2r_{\max}^2\hat{\pi}_{\max}^2/\epsilon_{\min}$ is
414.8148. We set $L$ 415 in RCA. In CEE Algorithm
\label{alg:sensing1}, we prove that if $B_i$ meets the requirement
stated in (\ref{eqn:Bq}) and $L>2$, the regret has a logarithmic
upper bound over time. In scenario S, the lower bound in
(\ref{eqn:Bq}) is 48.89. We set $L$ 2.1 and $B_i$ therefore to 49.
In the RUCB algorithm~\cite{Haoyang:2011}, it is required that
$L\geq
\frac{1}{\epsilon^*}(4\frac{20r_{\max}^2S_{\max}^2}{3-2\sqrt{2}}+10r_{\max}^2)$
and $D\geq \frac{4L}{(\mu^{\sigma(1)}-\mu^{\sigma(K+1)})^2}$. The
lower bounds are 3125.2 and 171480 and we accordingly set $L = 3126$
and $D = 171520$ in RUCB.

We simulate RCA, CEE and RUCB over 10 runs to calculate the regret.
The time horizon is 100 million. We also show the first 8 million
time slots of regret to compare the converging speed between RCA and
CEE. In order to access the stability of each algorithm, we also
present the variances of rewards over 100 runs for RCA, CEE and
RUCB.

The regret performance for all three algorithms are shown in Figure
\ref{fig:2} and Figure \ref{fig:4}. The reward variance for all
three algorithms is shown in Figure \ref{fig:3}.

\begin{figure*}[t]
\centering \subfigure[Regret/$\ln{\rm time}$  for RCA, CEE and RUCB]
{
\includegraphics[width=0.31\textwidth]{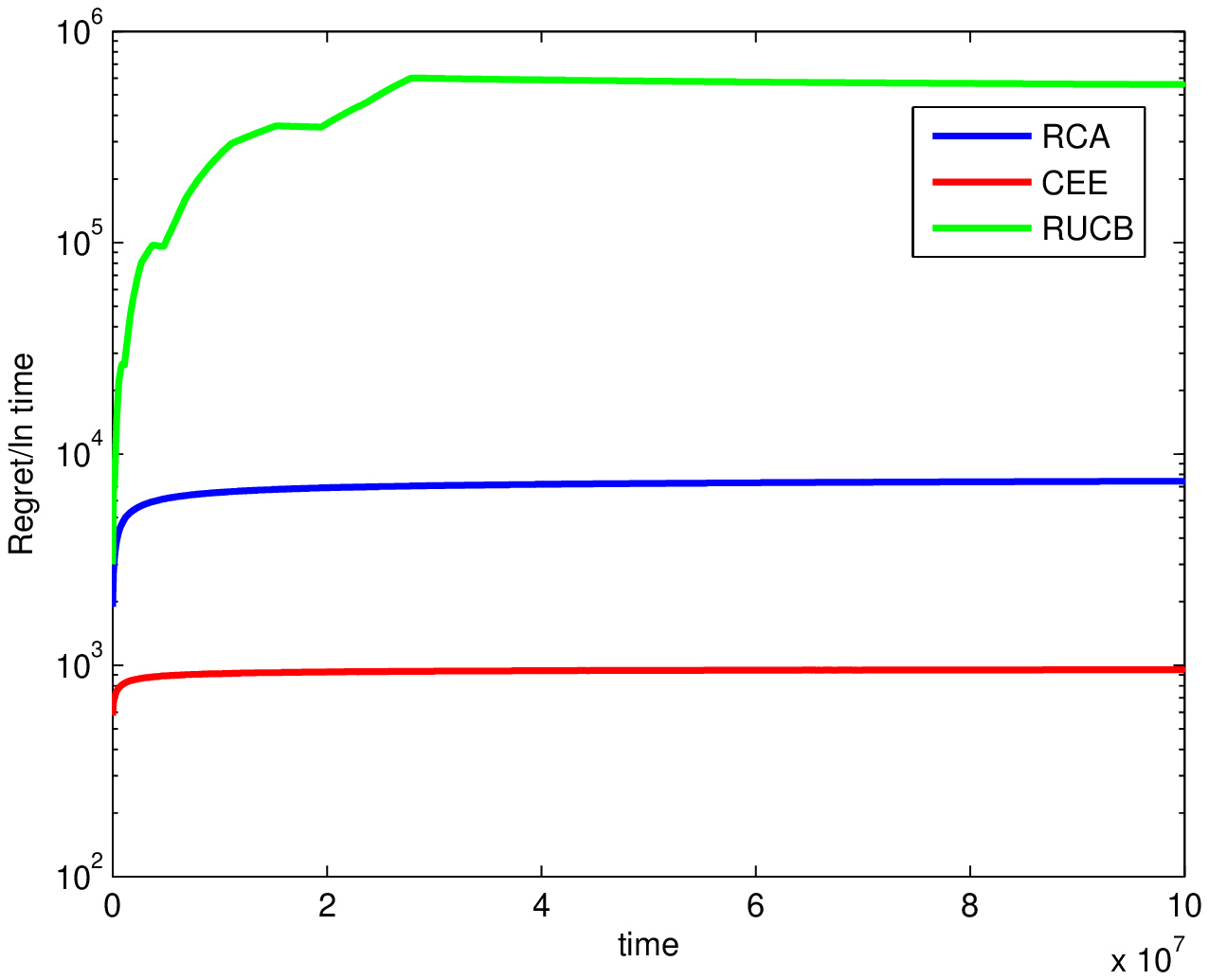}
\label{fig:2} } \subfigure[Regret/$\ln{\rm time}$ for RCA and CEE] {
\includegraphics[width=0.31\textwidth]{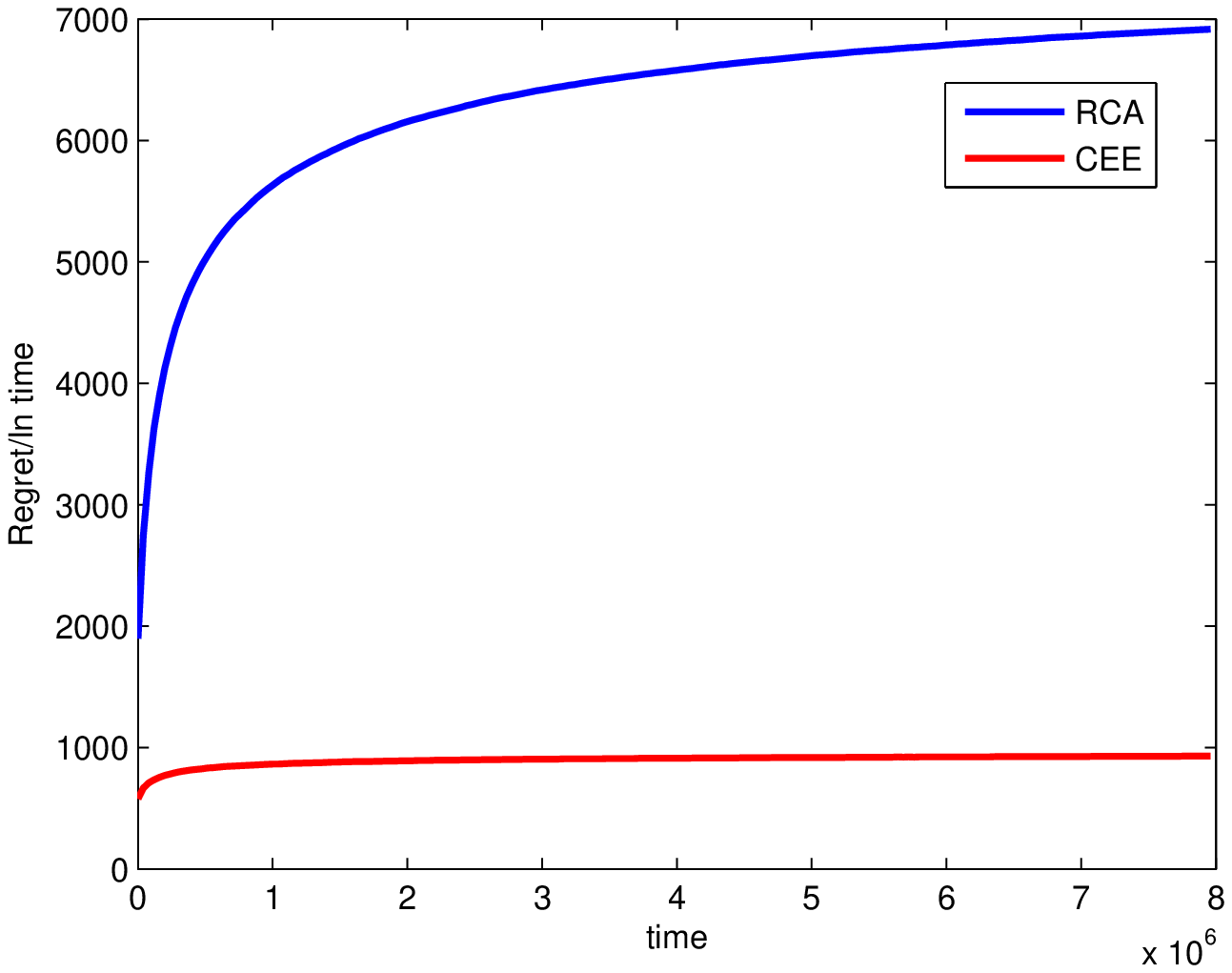}
\label{fig:4} } \subfigure[Reward variance for RCA, CEE and RUCB] {
\includegraphics[width=0.31\textwidth]{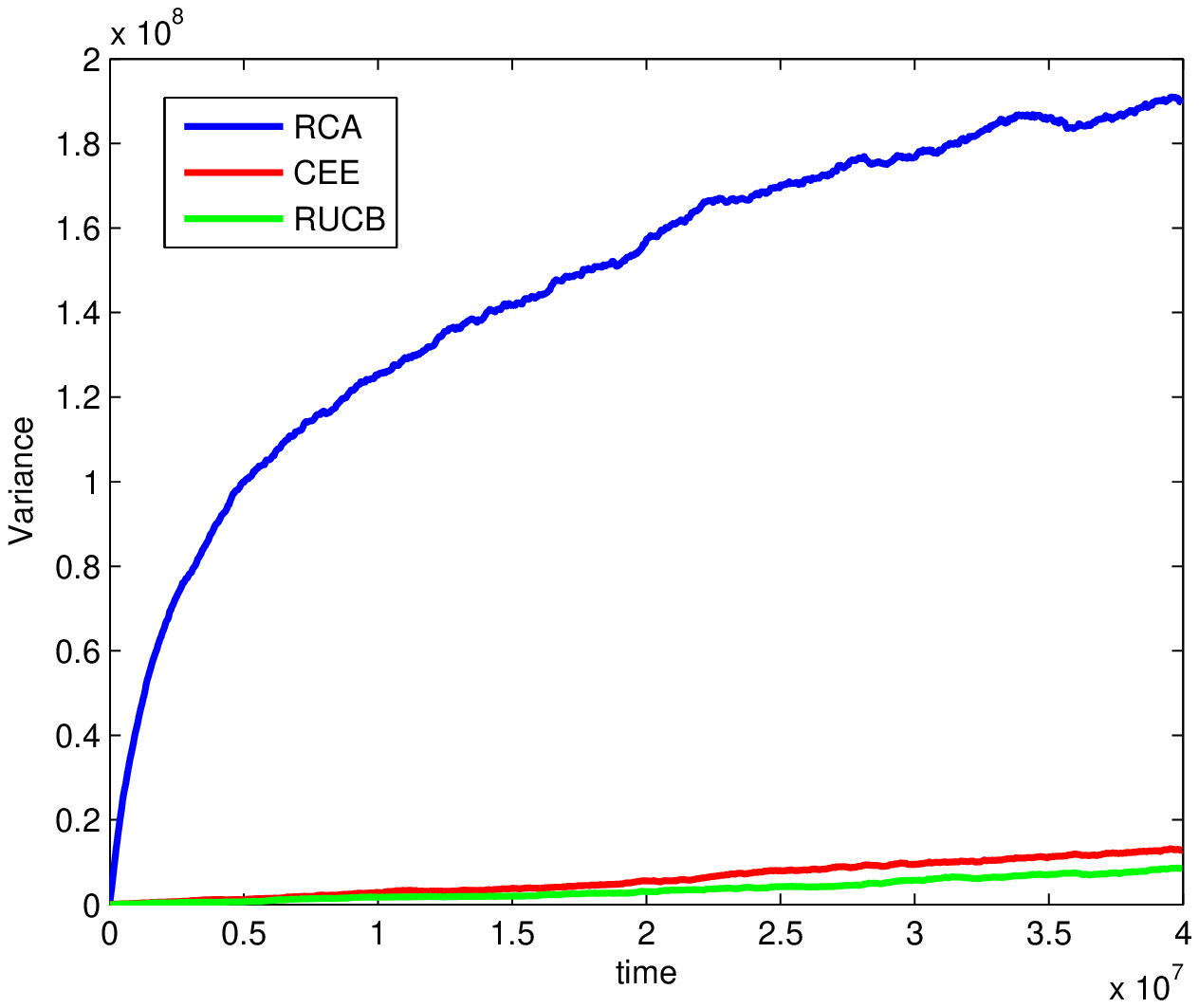}
\label{fig:3} } \caption{Regret and variance performance for RCA,
CEE and RUCB} \label{fig:matching}
\end{figure*}

\subsection{Discussion}

First of all, we note from the figures that CEE shows substantially
better regret performance than both RCA and RUCB. This is because in
CEE, the selection of arm depends on the whole observing history,
i.e. we exploit observing data in every time slot. In RCA, however,
the player chooses the arm only based on data in the second part of
each block (sub-block 2, SB2). In this way, CEE uses data much more
efficiently and the data sample means are much closer to their
expectations. As for RUCB, in exploration epoch, the player selects
every arm for certain times thus greatly reducing the chances to
play the optimal arm. It also shows the advantage of continuous
exploration and exploitation, which greatly cuts down the cost of
observing and exploring.

The second observation is that $\rm regret/\ln{\rm time}$ converges
much more quickly in CEE than in RCA and RUCB. One reason is the
regret in RCA is much greater than in Algorithm 1 so it needs more
time to reach the stationary point. Besides, as stated before, RCA
exploits data less efficiently, as the sample means are based on
only part of the observing history so they converge to the expected
value much more slowly. As for RUCB, the parameter $D$ is
considerably large and it needs quite a long time for the length of
exploration epoch to grow so that an exploitation epoch can appear.
The speed of RUCB is the slowest among these three algorithms.

Lastly, we see that the performance of RCA are much more random than
that in CEE and RUCB. The reward variances of RCA are much higher
than CEE and RUCB. The reason is that the number of time slots
between two selection in RCA is a random variable. The player stays
in the same arm until a pre-specified state is observed. In
different cases, the length of every block may vary a lot. In CEE,
however, the length of step is a constant number which greatly
reduces the randomness. In RUCB, the length of each epoch is also a
deterministic number. Besides, RUCB makes much less choices than CEE
and RCA. For these two reasons, RUCB also maintains a high
stability, albeit with poor regret performance.

In conclusion, CEE outperforms RCA and RUCB in two aspects, regret,
and convergence speed. The reward variances of RUCB and CEE
 are nearly the same, and much lower than RCA. Finally, we should note that because the boundary of parameter $B_i$ in (\ref{eqn:Bq}) is much smaller than that of parameter $L$ in RCA and $L$ and $D$ in RUCB, if we modify RCA and RUCB to make them a non-Baysian algorithm, our algorithm will converge much faster.

\section{Conclusion}
\label{sec:conclusion} In this paper, we have considered the
non-Bayesian restless multi-arm bandit problem which has been shown
to be of fundamental significance for opportunistic spectrum access
in cognitive radio networks. We use a weak notion of regret, defined
as the gap of expected reward compared to a genie who always plays
the $K$ best arms. We propose an algorithm which achieves a
near-logarithmic regret over time when no \emph{a prior} information about the system is
available. We also present another policy to achieve exact
logarithmic regret when some bounds pertaining to the stationary
state distribution and corresponding rewards are known. Compared
with prior work, this algorithm requires the least information. We
have also presented numerical results and analysis that show that
CEE significantly outperforms both of the two previously prosed
algorithms for this problem, RCA~\cite{TekinLiu} and RUCB
~\cite{Haoyang:2011}, in terms of regret and convergence speed, and
RCA in terms of reward variance.






%

\end{document}